\documentclass[11pt,a4paper]{article}
\usepackage[hyperref]{acl2021}
\usepackage{times}
\usepackage{latexsym}

\usepackage{microtype}
\usepackage{amsthm}
\usepackage{graphicx} % DO NOT CHANGE THIS
\usepackage{caption} % DO NOT CHANGE THIS AND DO NOT ADD ANY OPTIONS TO IT
\usepackage{subcaption}
\usepackage{scalerel}
\newtheorem{proposition}{Proposition}

\usepackage{multirow}
\usepackage{amsmath}
\usepackage{capt-of}
\usepackage{tabularx}
\usepackage{epsfig}
\usepackage{amssymb}
\usepackage{amsfonts}
\usepackage{booktabs}
\usepackage{scalerel}
\usepackage[inline]{enumitem}
\usepackage{listings}
\usepackage{varwidth}
\usepackage[export]{adjustbox}
\usepackage{tikz}
\usetikzlibrary{tikzmark}
\usepackage{cleveref}

\usepackage{stmaryrd}
\usepackage{bbm}

\newcommand{\sptk}[1]{\texttt{[#1]}}
\newcommand{\eqform}[1]{Equation~(\ref{#1})}

%%%%% NEW MATH DEFINITIONS %%%%%

\usepackage{amsmath,amsfonts,bm}

% Mark sections of captions for referring to divisions of figures

% Highlight a newly defined term

% Figure reference, lower-case.

% Figure reference, capital. For start of sentence

% Section reference, lower-case.

% Section reference, capital.

% Reference to two sections.

% Reference to three sections.

% Reference to an equation, lower-case.
\def\eqref#1{equation~\ref{#1}}
% Reference to an equation, upper case

% A raw reference to an equation---avoid using if possible

% Reference to a chapter, lower-case.

% Reference to an equation, upper case.

% Reference to a range of chapters

% Reference to an algorithm, lower-case.

% Reference to an algorithm, upper case.

% Reference to a part, lower case

% Reference to a part, upper case

\def\1{\bm{1}}

% Random variables

% rm is already a command, just don't name any random variables m

% Random vectors

% Elements of random vectors

% Random matrices

% Elements of random matrices

% Vectors

% Elements of vectors

% Matrix

% Tensor
\DeclareMathAlphabet{\mathsfit}{\encodingdefault}{\sfdefault}{m}{sl}
\SetMathAlphabet{\mathsfit}{bold}{\encodingdefault}{\sfdefault}{bx}{n}

% Graph

% Sets

% Don't use a set called E, because this would be the same as our symbol
% for expectation.

% Entries of a matrix

% entries of a tensor
% Same font as tensor, without \bm wrapper

% The true underlying data generating distribution

% The empirical distribution defined by the training set

% The model distribution

% Stochastic autoencoder distributions

 % Laplace distribution

\newcommand{\softmax}{\mathrm{softmax}}

\newcommand{\Var}{\mathrm{Var}}

% Wolfram Mathworld says $L^2$ is for function spaces and $\ell^2$ is for vectors
% But then they seem to use $L^2$ for vectors throughout the site, and so does
% wikipedia.

 % See usage in notation.tex. Chosen to match Daphne's book.

\aclfinalcopy % Uncomment this line for the final submission
 %  Enter the acl Paper ID here

\newcommand\ele{\textsc{Electra}}
\newcommand\electric{\textsc{Electric}}
\newcommand{\subG}{\textnormal{\textsc{g}}}
\newcommand{\subS}{\textnormal{\textsc{s}}}

\title{Learning to Sample Replacements for \textsc{Electra} Pre-Training}

\author{Yaru Hao$^{\dag\ddag}$\thanks{\ \  Contribution during internship at Microsoft Research.},~~Li Dong$^\ddag$,~~Hangbo Bao$^\ddag$,~~Ke Xu$^{\dag}$,~~Furu Wei$^\ddag$\\
$^\dag$Beihang University \\
$^\ddag$Microsoft Research \\
\texttt{\{haoyaru@,kexu@nlsde.\}buaa.edu.cn} \\
\texttt{\{lidong1,t-habao,fuwei\}@microsoft.com} \\}

\date{}

\begin{document}
\maketitle
\begin{abstract}
\ele{}~\cite{electra} pretrains a discriminator to detect replaced tokens, where the replacements are sampled from a generator trained with masked language modeling. Despite the compelling performance, \ele{} suffers from the following two issues. First, there is no direct feedback loop from discriminator to generator, which renders replacement sampling inefficient. Second, the generator's prediction tends to be over-confident along with training, making replacements biased to correct tokens. In this paper, we propose two methods to improve replacement sampling for \ele{} pre-training. Specifically, we augment sampling with a hardness prediction mechanism, so that the generator can encourage the discriminator to learn what it has not acquired. We also prove that the efficient sampling reduces the training variance of the discriminator. Moreover, we propose to use a focal loss for the generator in order to relieve oversampling correct tokens as replacements. Experimental results show that our method improves \ele{} pre-training on various downstream tasks.
Our code and pre-trained models will be released at \url{https://github.com/YRdddream/electra-hp}
\end{abstract}

\section{Introduction}

One of the most successful language model pre-training tasks is masked language modeling (MLM; \citealt{bert}).
First, we randomly mask some input tokens in a sentence. Then the encoder learns to recover the masked tokens given the corrupted input.
\ele{}~\cite{electra} argues that MLM only produces supervision signals at a small proportion of positions (usually $15\%$), and uses the replaced token detection task as an alternative.
Specifically, \ele{} contains a generator and a discriminator. The generator is a masked language model, which substitutes masks with the tokens sampled from its MLM predictions. The discriminator learns to distinguish which tokens have been replaced or kept the same.
Experimental results on downstream tasks show that \ele{} can largely improve sample efficiency.

Despite achieving compelling performance, it is usually difficult to balance the training pace between the generator and the discriminator.
Along with pre-training, the generator is expected to sample more hard replacements for the detection task in a curriculum manner, while the discriminator learns to identify the corrupted positions.
Although the two components are designed to compete with each other, there is no explicit feedback loop from the discriminator to the generator, rendering the learning games independent.
The absence of feedback results in sub-efficient learning, because many replaced tokens have been successfully trained while the generator does not know how to effectively sample replacements.
In addition, a well trained generator tends to achieve reasonably good MLM accuracy, where many sampled replacements are correct tokens.
In order to relieve the issue of oversampling correct tokens, \ele{} explored tweaking the mask probability larger, raising the sampling temperature, and using a manual rule to avoid sampling original tokens.

In this paper, we propose two methods, namely hardness prediction and sampling smoothing, to tackle the above issues.
First, the motivation of \textit{hardness prediction} is to sample the replacements that the discriminator struggles to predict correctly.
We elaborate on the benefit of a good replacement mechanism from the perspective of variance reduction.
Theoretical derivations indicate that the replacement sampling should be proportional to both the MLM probability (i.e., language frequency) and the corresponding discriminator loss (i.e., discrimination hardness).
Based on the above conclusion, we introduce a sampling head in the generator, which learns to sample by estimating the expected discriminator loss for each candidate replacement.
So the discriminator can give feedback to the generator, which helps the model to learn what it has not acquired.
Second, we propose a \textit{sampling smoothing} method for the issue of oversampling original tokens.
We adopt a focal loss~\cite{focalloss} for the generator's MLM task, rather than using cross-entropy loss.
The method adaptively downweights the well-predicted replacements for MLM, which avoids sampling too many correct tokens as replacements.

We conduct pre-training experiments on the WikiBooks corpus for both small-size and base-size models.
The proposed techniques are plugged into \ele{} for training from scratch.
Experimental results on various tasks show that our methods outperform \ele{} despite the simplicity.
Specifically, under the small-size setting, our model performance is $0.9$ higher than \ele{} on MNLI~\cite{mnli2017} and $4.2$ higher on SQuAD 2.0~\cite{squad}, respectively.
Under the base-size setting, our model performance is $0.26$ higher than \ele{} on MNLI and $0.52$ higher on SQuAD 2.0, respectively.

\section{Related Work}

State-of-the-art NLP models are mostly pretrained on a large unlabeled corpus with the self-supervised objectives~\cite{elmo,albert,t5}.
The most representative pretext task is masked language modeling (MLM), which is introduced to pretrain a bidirectional BERT~\cite{bert} encoder.
RoBERTa~\cite{roberta} apply several strategies to enhance the BERT performance, including training with more data and dynamic masking.
UniLM~\cite{unilm, unilm2} extend the mask prediction to generation tasks by adding the auto-regressive objectives.
XLNet~\cite{xlnet} propose the permuted language modeling to learn the dependencies among the masked tokens.
Besides, \ele{}~\cite{electra} propose a novel training objective called replaced token detection which is defined over all input tokens.
Moreover, \electric{}~\cite{electric} extends the idea of \ele{} by energy-based cloze models.

Some prior efforts demonstrate that sampling more hard examples is conducive to more effective training.
\citet{focalloss} propose the focal loss in order to focus on more hard examples.
Generative adversarial networks~\cite{gan} is trained to maximize the probability of the discriminator making a mistake, which is closely related to \ele{}'s training framework.
In this work, we aim at guiding the generator of \ele{} to sample the replacements that are hard for the discriminator to predict correctly, therefore the pre-training process of the discriminator can be more efficient.

\section{Background: \ele{}}
\label{sec:electra}

\begin{figure*}[t]
\centering
\includegraphics[width=0.98\linewidth]{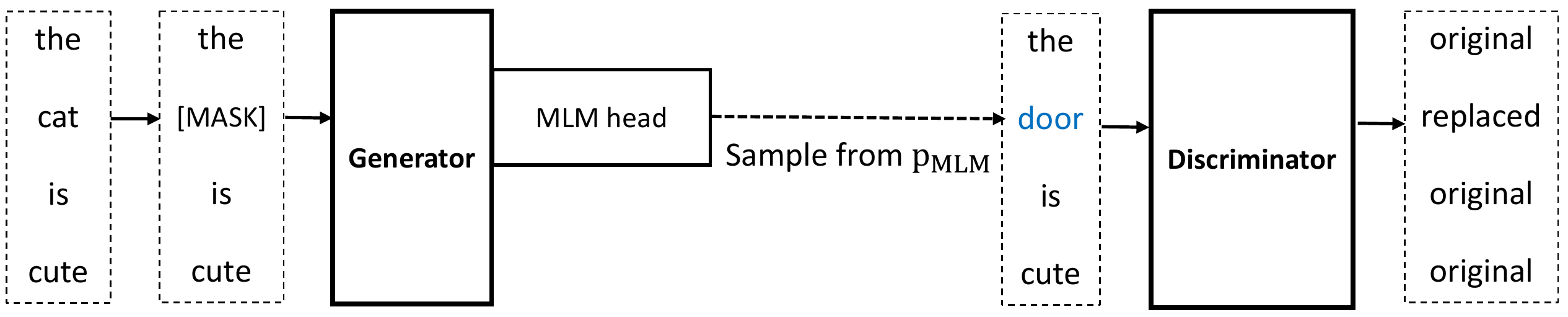}
\caption{An overview of \ele{}. 
The MLM head of the generator learns to perform MLM and samples replacements at each masked position from the MLM distribution. 
For the corrupted sequence, the discriminator learns to distinguish which tokens have been replaced or kept the same.}
\label{fig:ele_model}
\end{figure*}

\begin{figure*}[t]
\centering
\includegraphics[width=0.98\linewidth]{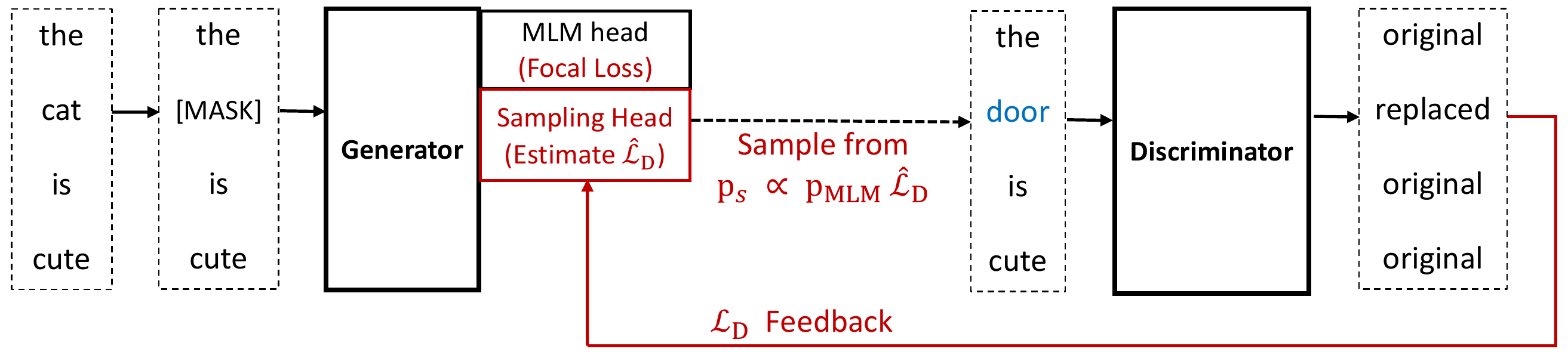}
\caption{An overview of our model.
The generator has two prediction heads. 
The MLM head learns to perform MLM through the focal loss instead of the cross entropy loss.
The sampling head is trained to estimate the discriminator loss over the vocabulary. 
Our model samples the replacements from a new distribution, which is proportional to both the MLM probability and the corresponding discriminator loss.
The discriminator is trained to distinguish input tokens and the loss feedback is transferred to the generator for the sampling head to learn.}
\label{fig:our_model}
\end{figure*}

An overview of \ele{} is shown in Figure~\ref{fig:ele_model}.
The model consists of a generator \emph{G} and a discriminator \emph{D}.
The generator is trained by masked language modeling (MLM).
Formally, given an input sequence $\boldsymbol{x} = x_1 \cdots x_n$, we first randomly mask $k=\lceil 0.15n \rceil$ tokens at the positions $\boldsymbol{m} = m_1 \cdots m_k$ with \sptk{MASK}.
The perturbed sentence $\boldsymbol{c}$ is denoted as:
\begin{align*}
&m_i \sim \mathrm{uniform}\{1, n\} \quad \mbox{for} \; i=1 , \cdots , k  \\
&\boldsymbol{c} = \mathrm{replace}(\boldsymbol{x}, \boldsymbol{m}, \sptk{MASK})
\end{align*}
where the $\mathrm{replace}$ operation conducts masking at the positions $\boldsymbol{m}$.
The generator encodes $\boldsymbol{c}$ and performs MLM prediction.
At each masked position $i$, we sample replacements from MLM output distribution $p_{\subG}$:
\begin{align*}
&{x'}_{i} \sim p_{\subG}(x_{i}'|\boldsymbol{c}) \quad \mbox{for} \; i \in \boldsymbol{m}  \\
&{\boldsymbol{x}}^{\rm R} = \mathrm{replace}(\boldsymbol{x}, \boldsymbol{m}, {\boldsymbol{x'}}) 
\end{align*}
where masks are replaced with the sampled tokens.
Next, the discriminator encodes the corrupted sentence ${\boldsymbol{x}}^{\rm R}$.
A binary classification task learns to distinguish which tokens have been replaced or kept the same, which predicts the probability $D(x^{\rm R}_t, \boldsymbol{x}^{\rm R})$ to indicate how likely $x^{\rm R}_t$ comes from the true data distribution.

The overall pre-training objective is defined as:
\begin{equation*}
{
\begin{aligned}
&\underset{\theta_G, \theta_D}{\rm min}\!\sum_{\boldsymbol{x} \in \mathcal{X}}\!\underset{i \in \boldsymbol{m}}{\rm \mathbb{E}}[\mathcal{L}_G^{\theta_G}(x_i,\! \boldsymbol{c})]\!+\!\lambda\!\underset{t \in [1,n]}{\rm \mathbb{E}}[\mathcal{L}_D^{\theta_D}(x^{\rm R}_t\!,\! \boldsymbol{x}^{\rm R})] \\
&\mathcal{L}_G(x_i, \boldsymbol{c})\!=\!-\!\log p_{\subG}(x_i|\boldsymbol{c}) \\
&\mathcal{L}_D(x^{\rm R}_t, \boldsymbol{x}^{\rm R})\!=\! 
\begin{cases}
    \!-\!\log D(x^{\rm R}_t, \boldsymbol{x}^{\rm R}) &x^{\rm R}_t\!=\!x_t \\
    \!-\!\log (1\!-\!D(x^{\rm R}_t, \boldsymbol{x}^{\rm R}))&x^{\rm R}_t\!\neq\!x_t \\
\end{cases}
\end{aligned}
}
\end{equation*}
where $\mathcal{X}$ represents text corpus, and $\lambda$ $=50$ suggested by \citet{electra} is a hyperparameter used to balance the training pace of generator and discriminator.
Once pre-training is finished, only the discriminator is fine-tuned on downstream tasks.

\section{Methods}
\label{sec:methods}

\subsection{Hardness Prediction}
\label{sec:hardness}

The key idea of hardness prediction is to let the generator receive the discriminator's feedback and sample more hard replacements.
Figure~\ref{fig:our_model} shows the overview of our method.
Besides the original MLM head in the generator, there is an additional sampling head used to sample replaced tokens.

Given a\footnote{For notation simplicity, we assume only one token is masked in each sentence.} replaced token $x'$ in the input sequence $c$, let ${\mathcal{L}}_D(x',c)$ denote the discriminator loss for the replacement.
Rather than directly sampling replacements from the MLM prediction $p_{\subG}$, we propose to sample from $p_{\subS}$:
\begin{align}
p_{\subS}(x'|c) &= \frac{p_{\subG}(x'|c)  {\mathcal{L}}_D(x',c)}{{\rm E}_{p_{\subG}(x^*|c)}[{\mathcal{L}}_D(x^*,c)]} \label{eq:opt_ps} \\
{\boldsymbol{x}}^{\rm R} &= \mathrm{replace}(\boldsymbol{x}, \boldsymbol{m}, {\boldsymbol{x'}}) \quad {x'} \sim p_{\subS}(x'|c) \nonumber
\end{align}
where the corrupted sentence ${\boldsymbol{x}}^{\rm R}$ is obtained by substituting the masked positions $\boldsymbol{m}$ with sampled replacements $\boldsymbol{x'}$.
The first term $p_{\subG}(x'|c)$ implies sampling from the data distribution.
The second term ${\mathcal{L}}_D(x',c)$ encourages the model to sample more replacements that the discriminator has not successfully learned.

Notice that \eqform{eq:opt_ps} uses the actual discriminator loss ${\mathcal{L}}_D(x',c)$, which can not be obtained without feeding ${\boldsymbol{x}}^{\rm R}$ into the discriminator.
As an alternative, we use the estimated loss value $\hat{\mathcal{L}}_D(x',c)$ to sample replaced tokens, which approximates the actual loss for the candidate replacement.
During pre-training, we use the actual loss as supervision, and simultaneously train the sampling head.
We describe the detailed implementations of loss estimation in Section~\ref{sec:implment:hardness}.

By considering detection hardness in replacement sampling and giving feedback from the discriminator to the generator, the components are no longer independently learned.
\ele{}~(\citealt{electra}; Appendix F) also attempts to achieve the same goal by adversarially training the generator.
However, it underperforms the maximum-likelihood training, because of the poor sample efficiency of reinforcement learning on discrete text data.
More importantly, their generator is trained to fool the discriminator, rather than guiding the discriminator by data distribution, which breaks the \ele{} training objective.
In contrast, we still retain the MLM head, and decouple it from replacement sampling. So we can take the advantage of the original training objective.

\subsubsection{Perspective of Variance Reduction}

We show that the proposed hardness prediction method is well supported from the perspective of variance reduction.

\begin{proposition}
Sampling replacements from $p_{\subS}(x'|c)$ can minimize the estimation variance of the discriminator loss.
\end{proposition}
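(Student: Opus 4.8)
The plan is to read the denominator of \eqform{eq:opt_ps}, namely $\mu(c) := \E_{p_{\subG}(x^*|c)}[{\mathcal{L}}_D(x^*,c)]$, as the target quantity that the discriminator's expected loss at the masked position represents, and to view the choice of replacement distribution as the choice of an importance-sampling proposal for estimating it. Concretely, for any proposal $q(\cdot|c)$ over the vocabulary with $q(x'|c)>0$ wherever $p_{\subG}(x'|c)>0$, drawing $x'\sim q(\cdot|c)$ and forming
\[
\widehat{\mu}_q \;=\; \frac{p_{\subG}(x'|c)}{q(x'|c)}\,{\mathcal{L}}_D(x',c)
\]
yields an \emph{unbiased} estimator of $\mu(c)$. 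The first step is to write $\Var_q[\widehat{\mu}_q] = \E_q[\widehat{\mu}_q^{\,2}] - \mu(c)^2$ and note that only the second moment, $\E_q[\widehat{\mu}_q^{\,2}] = \sum_{x'} \frac{\big(p_{\subG}(x'|c)\,{\mathcal{L}}_D(x',c)\big)^2}{q(x'|c)}$, depends on $q$, so minimizing the variance reduces to minimizing this sum over probability vectors $q$.

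The second step is the optimization itself. Setting $a_{x'} := p_{\subG}(x'|c)\,{\mathcal{L}}_D(x',c)$ and using that ${\mathcal{L}}_D\ge 0$ (it is a negative log-probability), hence $a_{x'}\ge 0$, I would apply Cauchy--Schwarz,
\[
\Big(\sum_{x'} a_{x'}\Big)^2 = \Big(\sum_{x'} \tfrac{a_{x'}}{\sqrt{q(x'|c)}}\,\sqrt{q(x'|c)}\Big)^2 \le \Big(\sum_{x'} \tfrac{a_{x'}^2}{q(x'|c)}\Big)\Big(\sum_{x'} q(x'|c)\Big) = \sum_{x'} \tfrac{a_{x'}^2}{q(x'|c)},
\]
with equality exactly when $q(x'|c)\propto a_{x'}$. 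Normalizing gives $q^\star(x'|c) = a_{x'}/\sum_{x''}a_{x''} = \frac{p_{\subG}(x'|c)\,{\mathcal{L}}_D(x',c)}{\E_{p_{\subG}(x^*|c)}[{\mathcal{L}}_D(x^*,c)]}$, which is precisely $p_{\subS}(x'|c)$ from \eqform{eq:opt_ps}; an equivalent Lagrange-multiplier argument on $\sum_{x'} a_{x'}^2/q_{x'}$ subject to $\sum_{x'} q_{x'}=1$ gives the same minimizer. Substituting $q^\star$ back shows the minimal second moment is $\mu(c)^2$, so in the idealized single-token setting $\widehat{\mu}_{p_{\subS}}$ is in fact constant and $\Var_{p_{\subS}}[\widehat{\mu}]=0$; in general this establishes that $p_{\subS}$ attains the smallest variance among all valid proposals, and the claim for the full objective follows by summing/averaging over sentences and masked positions.

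The part to get right is not a deep obstacle but a framing issue: one must justify treating the discriminator's expected loss as an importance-sampling target, so that reweighting by $p_{\subG}/q$ preserves unbiasedness and the comparison across sampling distributions is meaningful; once that is fixed, the variance minimization is the textbook optimal-importance-sampling computation. Two minor points deserve explicit care: (i) the support condition, which rules out degenerate proposals that put zero mass where $p_{\subG}{\mathcal{L}}_D\neq 0$; and (ii) the nonnegativity of ${\mathcal{L}}_D$, which is exactly what makes the optimal proposal coincide with $p_{\subS}$ with no absolute-value correction. I would also remark that in the implemented model ${\mathcal{L}}_D$ is replaced by the learned estimate $\hat{\mathcal{L}}_D$, so the realized variance reduction is approximate; this is worth noting but is not part of the formal statement.
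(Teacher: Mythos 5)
Your proof is correct and follows essentially the same route as the paper: both treat $Z=\E_{p_{\subG}(x^*|c)}[\mathcal{L}_D(x^*,c)]$ as the target of an importance-sampling estimator with weight $p_{\subG}/q$, compute the variance of that estimator, and identify $p_{\subS}\propto p_{\subG}\,\mathcal{L}_D$ as the zero-variance optimum. The only difference is cosmetic --- you establish the minimizer via Cauchy--Schwarz on the second moment, while the paper completes the square to exhibit the variance as $\sum_{x^*}\bigl(p_{\subG}\mathcal{L}_D-p_{\subS}Z\bigr)^2/p_{\subS}$, which vanishes exactly at the proposed $p_{\subS}$; your explicit attention to the support and nonnegativity conditions is a welcome addition.
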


\begin{proof}
At each masked position, the expectation of the discriminator loss we aim to estimate can be summarized as $\mathrm{Z} = {\rm E}_{p_{\subG}(x^*|c)}[\mathcal{L}_D(x^*, c)]$.
Under $p_{\subG}$, the estimation variance of the discriminator loss is:
\begin{align*}
& \qquad \Var_{p_{\subG}(x^*|c)}[\mathcal{L}_D(x^*, c)] \\
&= \sum_{x^* \in \text{vocab}} p_{\subG}(x^*|c) (\mathcal{L}_D(x^*,c) - \mathrm{Z})^2 \\
&= \sum_{x^* \in \text{vocab}} p_{\subG}(x^*|c) \mathcal{L}_D(x^*, c)^2 - \mathrm{Z}^2
\end{align*}

Similar to importance sampling, we can select an alternative distribution $p_{\subS}$ different from $p_{\subG}$, then the expectation $\mathrm{Z}$ is rewritten as:
\begin{align*}
&\qquad {\rm E}_{p_{\subG}(x^*|c)}[\mathcal{L}_D(x^*, c)] \\
&= \sum_{x^* \in \text{vocab}} p_{\subG}(x^*|c) \mathcal{L}_D(x^*,c) \\
&= \sum_{x^* \in \text{vocab}} p_{\subS}(x^*|c)  \frac{p_{\subG}(x^*|c)}{p_{\subS}(x^*|c)} \mathcal{L}_D(x^*,c) \\
&= {\rm E}_{p_{\subS}(x^*|c)}[\frac{p_{\subG}(x^*|c)}{p_{\subS}(x^*|c)} \mathcal{L}_D(x^*,c)]
\end{align*}
By making a multiplicative adjustment to $\mathcal{L}_D$, the estimation variance of $\mathrm{Z}$ under the new sampling distribution $p_{\subS}$ is converted to:
\begin{align*}
    &\qquad \Var_{p_{\subS}(x^*|c)}[\frac{p_{\subG}(x^*|c)}{p_{\subS}(x^*|c)} \mathcal{L}_D(x^*, c)] \\
    &= \sum_{x^* \in \text{vocab}} p_{\subS} (\frac{p_{\subG}(x^*|c)  \mathcal{L}_D(x^*,c)}{p_{\subS}})^2 - \mathrm{Z}^2 \\
    &= \sum_{x^* \in \text{vocab}} \frac{(p_{\subG}(x^*|c)  \mathcal{L}_D(x^*,c) - p_{\subS}(x^*|c) \mathrm{Z} )^2}{p_{\subS}(x^*|c)}
\end{align*}

Based on the above derivation, it is obvious that we obtain a zero-variance estimator when we choose $p_{\subS}(x^*|c)=p_{\subG}(x^*|c) \mathcal{L}_D(x^*,c) / \mathrm{Z}$ as \eqform{eq:opt_ps}.
This theoretically optimal form provides us insights into designing the above sampling scheme.
\end{proof}

\subsubsection{Two Implementations of Hardness Prediction}
\label{sec:implment:hardness}

We design two variants of the sampling head.
The first one is to explicitly estimate the discriminator loss (HP$_{\rm Loss}$).
The second method is to approximate the expected sampling distribution (HP$_{\rm Dist}$).

\paragraph{HP$_{\rm Loss}$} guides the generator to learn the probability predicted by the discriminator that the sampled token $x'$ is an original token.
In this case, the output layer of the sampling head is actually a $\mathrm{sigmoid}$ function same as the discriminator:
\begin{equation*}
\hat{D}(x',c) = \mathrm{sigmoid} (\boldsymbol{w}(x') \cdot \boldsymbol{h}_S(c))
\end{equation*}
where $\boldsymbol{h}_S(c)$ denotes the contextual representations projected by the sampling head, and $\boldsymbol{w}$ denotes the projection parameters.
Then the loss of the sampling head at the masked position is:
\begin{align*}
\mathcal{L}_S(x',c) = (\hat{D}(x',c) - D(x',c))^2 
\end{align*}

When sampling replacements over the vocabulary, the estimated discriminator probability $\hat{D}(x',c)$ can be easily rewritten to the estimated discriminator loss $\hat{\mathcal{L}}_D(x',c)$:
\begin{align*}
&\hat{\mathcal{L}}_D(x',c) = 
\begin{cases}
- \log \hat{D}(x',c) & x' = x \\
- \log (1 - \hat{D}(x',c)) & x' \neq x \\
\end{cases} 
\end{align*}
Multiplying the MLM probability factor $p_{\subG}$, we obtain the sampling distribution:
\begin{align*}
p_{\subS}(x'|c) &= \frac{p_{\subG}(x'|c) \hat{\mathcal{L}}_D(x',c)}{\sum_{x^* \in \text{vocab}} p_{\subG}(x^*|c) \hat{\mathcal{L}}_D(x^*,c)} \\
&= \frac{p_{\subG}(x'|c) \hat{\mathcal{L}}_D(x',c)}{{\rm E}_{p_{\subG}(x^*|c)}[\hat{\mathcal{L}}_D(x^*,c)]}
\end{align*}

\paragraph{HP$_{\rm Dist}$} aims to directly approximate the expected sampling distribution as in \eqform{eq:opt_ps}, instead of the discriminator loss.
In this case, the sampling head produces an output probability of the token $x'$ with a $\softmax$ layer:
\begin{equation}
\label{eq:hp:dist:softmax}
p_{\subS}(x'|c) = \frac{\exp (\boldsymbol{e}(x') \cdot \boldsymbol{h}_S(c))}{\sum_{x^* \in \text{vocab}} \exp (\boldsymbol{e}(x^*) \cdot \boldsymbol{h}_S(c))}
\end{equation}
where $\boldsymbol{e}$ represents the token embeddings.
For the sampled token $x'$, we define the loss of the sampling head as:
\begin{align*}
\mathcal{L}_S(x',c) = - \frac{p_{\subG}(x'|c)}{p_{\subS}(x'|c)} \mathcal{L}_D(x',c) \log p_{\subS}(x'|c)
\end{align*}
Then we show that minimizing the above loss $\mathcal{L}_S(x',c)$ pushes sampling distribution of \eqform{eq:hp:dist:softmax} to our goal.
Specifically, the loss expectation over the whole vocabulary is:
\begin{align*}
&{\rm E}_{p_{\subS}(x^*|c)}[\mathcal{L}_S(x^*,c)] = \\
&\!-\!\sum_{x^* \in \text{vocab}} p_{\subS}(x^*|c)\!\frac{p_{\subG}(x^*|c)}{p_{\subS}(x^*|c)} \mathcal{L}_D(x^*,c)\!\log p_{\subS}(x^*|c) \\
&= - \sum_{x^* \in \text{vocab}} p_{\subG}(x^*|c) \mathcal{L}_D(x^*,c) \log p_{\subS}(x^*|c)
\end{align*}
According to the Lagrange Multiplier method, the optimal solution $\tilde{p}_{\subS}$ of the loss function $\mathcal{L}_S(x',c)$ is consistent with \eqform{eq:opt_ps}:
\begin{align*}
\tilde{p}_{\subS}(x'|c) &= \frac{p_{\subG}(x'|c) \mathcal{L}_D(x',c)}{\sum_{x^* \in \text{vocab}} p_{\subG}(x^*|c) \mathcal{L}_D(x^*,c)} \\
&= \frac{p_{\subG}(x'|c) \mathcal{L}_D(x',c)}{{\rm E}_{p_{\subG}(x^*|c)}[\mathcal{L}_D(x^*,c)]}
\end{align*}

\subsection{Sampling Smoothing}
\label{sec:focal}

Along with the learning process, the masked language modeling tends to achieve relatively high accuracy.
As a consequence, the generator oversamples the correct tokens as replacements, which renders the discriminator learning inefficient.

In order to address the issue, we apply an alternative loss function called focal loss~\cite{focalloss} for MLM of the generator.
Compared with the vanilla cross-entropy loss, focal loss adds a modulating factor for the weighting purpose:
\begin{align*}
\mathcal{L}^{{\rm fc}}_G(x, c) = - (1 - p_{\subG}(x|c))^{\gamma} \log p_{\subG}(x|c) 
\end{align*}
where $\gamma \geq 0$ is a tunable hyperparameter.
Besides using a constant $\gamma$, we try the piecewise function $\gamma = \mathbbm{1}(p_{\subG}\!>\! 0.2) * 3 + \mathbbm{1}(p_{\subG} \leq 0.2) * 5$ in our experiments as suggested by~\citet{calibfocal}.

In other words, the focal loss is used to adaptively down-weight the well-classified easy examples and thus focusing on more difficult ones.
When applying the focal loss to the MLM head for the generator, we notice that if a token is easy for the generator to be predicted correctly, i.e., $p_{\subG}(x|c) \rightarrow 1$, the modulating factor is greatly decreased.
In contrast, if a token is hard to predict, the focal loss approximates to the original cross entropy loss.
Therefore, we propose to employ the focal loss in order to smooth the sampling distribution, which in turn relieves oversampling correct tokens as replacements.

\begin{table*}[t]
\centering
% \small
\scalebox{0.92}{
\begin{tabular}{l c c c c c c c c c}
\toprule
\multirow{2}{*}{\bf Model}   & \textbf{MNLI} & \textbf{QNLI} & \textbf{QQP} & \textbf{RTE} & \textbf{SST} & \textbf{MRPC} & \textbf{CoLA} & \textbf{STS}  \\
&      -m/-mm      &      Acc       &      Acc      &      Acc      &      Acc      &      Acc      &      MCC      &      PCC      \\ \midrule
~~\emph{Small-size} & \\
\ele{} (\emph{reimplementation}) & 80.3/80.6 & 89.0 & 89.2 & 62.2 & 89.0 & \textbf{87.3} & 59.6 & 86.3 
\\
\midrule
\ele{}+HP$_{\rm Loss}$+Focal & \textbf{81.2}/81.7 & 89.1 & \textbf{89.6} & \textbf{66.1} & \textbf{89.2} & 86.6 & 59.3 & 86.7  \\
\ele{}+HP$_{\rm Dist}$+Focal & 81.0/\textbf{81.8} & \textbf{89.3} & \textbf{89.6} & 62.5 & \textbf{89.2} & 86.8 & \textbf{59.7} & \textbf{86.8} \\
\midrule
~~\emph{Base-size} & \\
BERT~\cite{bert} & 84.5/- & 88.6 & 90.8 & 68.5 & 92.8 & 86.0 & 58.4 & 87.8 \\
RoBERTa~\cite{roberta} & 84.7/- & - & - & - & 92.7 & - & - & -  \\
XLNet~\cite{xlnet} & 85.8/- & - & - & - & \textbf{93.4} & - & - & - \\
\ele{}~\cite{electra} & 86.2/- & 92.4 & 90.9 & 76.3 & 92.4 & 87.9 & 65.8 & 89.1  \\
\electric{}~\cite{electric} & 85.7/- & 92.1 & 90.6 & 73.4 & 91.9 & 88.0 & 61.8 & 89.4 \\
\ele{} (\emph{reimplementation}) &  86.7/86.5 & 92.6 & 91.4 & 80.4 & 92.6 & 89.1 & 66.5 & \textbf{91.0} \\
\midrule
\ele{}+HP$_{\rm Loss}$+Focal & \textbf{87.0}/\textbf{86.9} & \textbf{92.7} & \textbf{91.7} & \textbf{81.3} & 92.6 & \textbf{90.7} & 66.7 & \textbf{91.0}  \\
\ele{}+HP$_{\rm Dist}$+Focal & 86.8/86.8 & 92.3 & 91.6 & 80.0 & 92.7 & 89.8 & \textbf{67.3} & 90.9 \\
\bottomrule
\end{tabular}}
\caption{Comparisons between our models and previous pretrained models on GLUE dev set. 
Reported results are medians over five random seeds.
}
\label{tbl:glue_res}
\end{table*}

\subsection{Pre-Training Objective}

Adopting the above two strategies, we jointly train the generator and the discriminator together as the original \ele{} model.
The word embeddings of them are still tied during the pre-training stage.
Formally, we minimize the combined loss over a large corpus $\mathcal{X}$:
\begin{equation*}
{
\begin{aligned}
&\underset{\theta_G, \theta_S, \theta_D}{\rm min} \sum_{\boldsymbol{x} \in \mathcal{X}} \Big( \underset{i \in \boldsymbol{m}}{\rm \mathbb{E}}[\mathcal{L}_G^{{\rm fc}, \theta_G}(x_i, \boldsymbol{c})] + \\ &\lambda_1 \underset{i \in \boldsymbol{m}}{\rm \mathbb{E}}[\mathcal{L}_S^{\theta_S}(x^{\rm R}_i, \boldsymbol{c})] + \lambda_2 \underset{t \in [1,n]}{\rm \mathbb{E}}[\mathcal{L}_D^{\theta_D}(x^{\rm R}_t, \boldsymbol{x}^{\rm R})] \Big) \\
\end{aligned}
}
\end{equation*}
where $\lambda_1, \lambda_2$ are two hyperparameters to adjust three parts of the loss.
We only search $\lambda_1$ value and keep $\lambda_2=50$ for the fair comparison with \ele{}.
After pre-training, we throw out the generator and only fine-tune the discriminator on the downstream tasks.

\section{Experiments}

\subsection{Setup}

We implement \ele{}+HP$_{\rm Loss}$/HP$_{\rm Dist}$+Focal on both the \emph{small-size} setting and the \emph{base-size} setting.
The two prediction heads share both the generator and the token embeddings, which avoids the unnecessary increase in model complexity.
We follow most settings as suggested in \ele{}~\citep{electra}.
In order to enhance the \ele{} baseline for a solid comparison, we add the relative position~\cite{t5}. Experimental results show that our methods can improve performance even on the enhanced \ele{} baseline.

We pretrain our models on the same text corpus as \ele{}, which is a combination of English Wikipedia and BooksCorpus~\cite{wikidata}.
We also adopt the N-gram masking strategy which is beneficial for MLM tasks.
The models are trained for 1M steps for small-size models and 765k steps for base-size models, so that the computation consumption can be similar to baseline models~\cite{electra}.
The base-size models are pretrained with $16$ V100 GPUs less than five days.
The small-size models are pretrained with $8$ V100 GPUs less than three days.
We use the Adam~\cite{adam} optimizer ($\beta_1=0.9, \beta_2=0.999$) with learning rate of 1e-4.
The value of $\lambda_2$ in the training objective is kept fixed at $50$ for a fair comparison with \ele{}.
For HP$_{\rm Loss}$, we search $\lambda_1$ in $\{5, 10, 20\}$, the best one is $5$.
For HP$_{\rm Dist}$, we keep $\lambda_1=1$.
We search the focal loss weight $\gamma$ in $\{1, 4\}$ on both the base-size and small-size model, the best configuration is $\gamma=1$.
The detailed pre-training configurations are provided in the supplemental materials.

\subsection{Results on GLUE Benchmark}

The General Language Understanding Evaluation (GLUE) benchmark~\cite{glue} is a collection of diverse natural language understanding (NLU) tasks, including inference tasks (MNLI, QNLI, RTE; \citealt{rte1,rte2,rte3,rte5,mnli2017, squad1}), similarity and paraphrase tasks (MRPC, QQP, STS-B; \citealt{mrpc2005, sts-b2017}), and single-sentence tasks (CoLA, SST-2; \citealt{cola2018, sst2013}).
The detailed descriptions of GLUE datasets are provided in the supplementary materials.
The evaluation metrics are Spearman correlation for STS-B, Matthews correlation for CoLA, and accuracy for the other GLUE tasks.

For small-size settings, we use the hyperparameter configuration as suggested in~\cite{electra}.
For base-size settings, we consider a limited hyperparameter searching for each task, with learning rates $\in \{$5e-5, 1e-4, 1.5e-4$\}$ and training epochs $\in \{3, 4, 5\}$.
The remaining hyperparameters are the same as \ele{}.
We report the median performance on the dev set over five different random seeds for each task.
All the results come from the single-task fine-tuning.
For more detailed fine-tuning configurations, please refer to the supplementary materials.

Results are shown in Table~\ref{tbl:glue_res}.
With the same configuration and pre-training data, for both the small-size and the base-size, our methods outperform the strong reimplemented \ele{} baseline by 0.6 and 0.4 on average respectively.
For the most widely reported task MNLI, our models achieve 87.0/86.9 points on the matched/mismatched set, which obtains 0.3/0.4 absolute improvements. The performance gains on the small-size models are more obvious than the base-size models, we speculate that is due to the learning of the small-size generator is more insufficient and suffers from the above issues more significantly.
The results demonstrate that our proposed methods can improve the pre-training of \ele{}.
In other words, sampling more hard replacements is more efficient than the original masked language modeling.

\subsection{Results on SQuAD 2.0}

The Stanford Question Answering Dataset (SQuAD; \citealt{squad}) is a reading comprehension dataset, each example consists of a context and a question-answer pair.
Given a context and a question, the task is to answer the question by extracting the relevant span from the context.
We only use the version 2.0 for evaluation, where some questions are not answerable.
We report the results of both the Exact-Match (EM) and F1 score.
When fine-tuning on SQuAD, we add the question-answering module from XLNet on the top of the discriminator as~\citet{electra}.
All the hyperparameter configurations are the same as \ele{}.
We report the median performance on the dev set over five different random seeds. 
Refer to the appendix for more details about fine-tuning.

Results on SQuAD 2.0 are shown in Table~\ref{tbl:sq2_res}.
Consistently, our models perform better than \ele{} baseline under both the small-size setting and the base-size setting.
Under the base setting, our models improve the performance over the reimplemented \ele{} baseline by 0.6 points (EM) and 0.5 points (F1).
Especially under the small setting, our models outperform the baseline by a remarkable margin.
\ele{}+HP$_{\rm Dist}$+Focal obtains 4.3 and 4.2 points absolute improvements on EM and F1 metric.

\begin{table}[t]
\centering
\begin{tabular}{l c c }
\toprule
\multirow{2}*{\textbf{Model}} & \multicolumn{2}{c}{\textbf{SQuAD 2.0}} \\ 
& EM & F1 \\ \midrule
~~\emph{Small-size} & \\
\ele{} (\emph{reimplementation}) & 68.9 & 71.3  \\ 
\midrule
\ele{}+HP$_{\rm Loss}$+Focal & 71.8 & 74.3\\
\ele{}+HP$_{\rm Dist}$+Focal & \textbf{73.2} & \textbf{75.5}\\
\midrule
~~\emph{Base-size} & \\
BERT~\cite{bert} & 73.7 & 77.1 \\
RoBERTa~\cite{roberta} & - & 79.7 \\
XLNet~\cite{xlnet} & 78.2 & 81.0 \\
\ele{}~\cite{electra} & 80.5 & 83.3 \\
\electric{}~\cite{electric} & 80.1 & - \\
\ele{} (\emph{reimplementation}) & 82.4 & 85.1  \\
\midrule
\ele{}+HP$_{\rm Loss}$+Focal & \textbf{83.0} & \textbf{85.6} \\
\ele{}+HP$_{\rm Dist}$+Focal & 82.7 & 85.4 \\
\bottomrule
\end{tabular}
\caption{Comparisons between our models and previous pretrained models on SQuAD 2.0 dev set.
Reported results are medians over five random seeds.}
\label{tbl:sq2_res}
\end{table}

\begin{table}[t]
\centering
\begin{tabular}{l c c}
\toprule
\textbf{Model}   & \textbf{MNLI-m} & \textbf{SQuAD 2.0}   \\ 
\midrule
\ele{} & 80.3 & 71.3 \\
\midrule
\small{\emph{\ele{} + HP$_{\rm Loss}$}}    \\
$\lambda_1 = 5$    & 80.9 & 74.2 \\
$\lambda_1 = 10$   & 81.1 & 75.1 \\
$\lambda_1 = 20$   & 81.0 & 74.1 \\
\midrule
\multicolumn{2}{c}{\small{\emph{\ele{} + HP$_{\rm Loss}$ ($\lambda_1 = 5$) + Focal} }}  \\
$\gamma=
\scaleobj{0.75}
{
\begin{cases}
3& p_{\subG} > 0.2\\
5& p_{\subG} \leq 0.2
\end{cases}}$   & 81.0 & 74.7 \\
$\gamma = 1.0$   & 81.2 & 74.3 \\
$\gamma = 4.0$   & 80.9 & 74.2 \\
\bottomrule
\end{tabular}
\caption{Ablation studies on small-size models. 
We analyze the effect of the hardness prediction loss weight $\lambda_1$ and the focal loss factor $\gamma$.
Reported results are medians over five random seeds.}
\label{tbl:ablation}
\end{table}

\subsection{Ablation Studies}

We conduct ablation studies on small-size \ele{}+HP$_{\rm Loss}$+Focal models.
We investigate the effect of the loss weight $\lambda_1$ of the sampling head and the focal loss factor $\gamma$ in order to better understand their relative importance.
Results are presented in Table~\ref{tbl:ablation}.

We first disable the focal loss and only understand the effect of $\lambda_1$.
As shown in Table~\ref{tbl:ablation}, no matter what the value of $\lambda_1$ is, our models exceed the baseline by a substantial margin, which demonstrates that the hardness prediction can indeed improve the pre-training and our methods are not sensitive to the loss weight hyperparameter.
Next, we fix $\lambda_1$ at 5 and understand the effect of the focal loss factor $\gamma$.
We observe that the application of the focal loss with piecewise $\gamma = \mathbbm{1}(p_{\subG}\!>\!0.2) * 3 + \mathbbm{1}(p_{\subG} \leq 0.2) * 5$ and $\gamma=1$ can improve the performance on two datasets, which proves the effectiveness of the sampling smoothing.

\section{Analysis}
To better understand the main advantages of our models over \ele{}, we conduct several analysis experiments.

\subsection{Impacts on Sampling Distributions}

\begin{figure}[t!]
\centering
\begin{subfigure}[t]{0.46\textwidth}
\centering
\includegraphics[width=\textwidth]{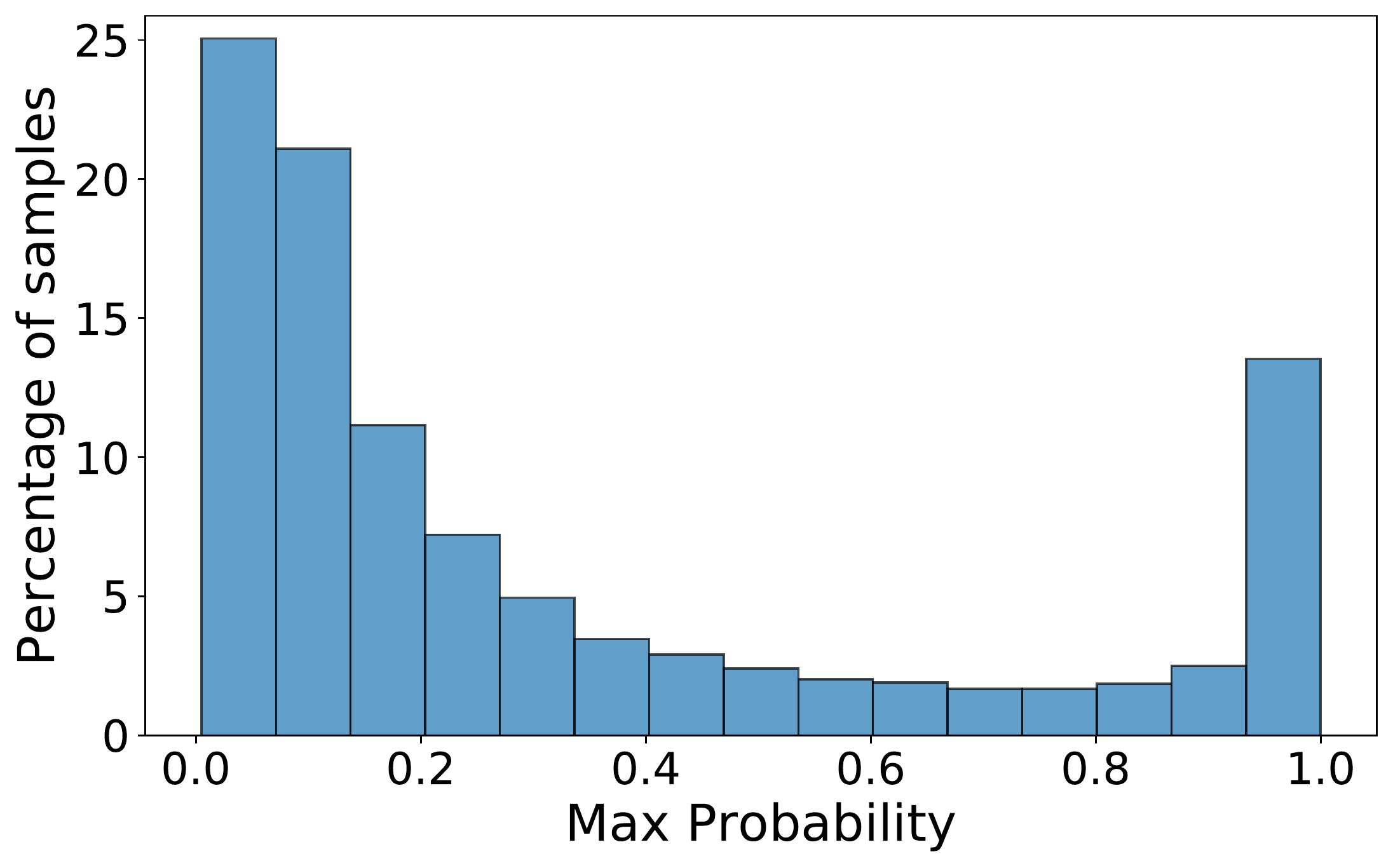}
\caption{Sampling Distribution of \ele{}}
\label{fig:conf_pg}
\end{subfigure}
\hfill
\begin{subfigure}[t]{0.46\textwidth}
\centering
\includegraphics[width=\textwidth]{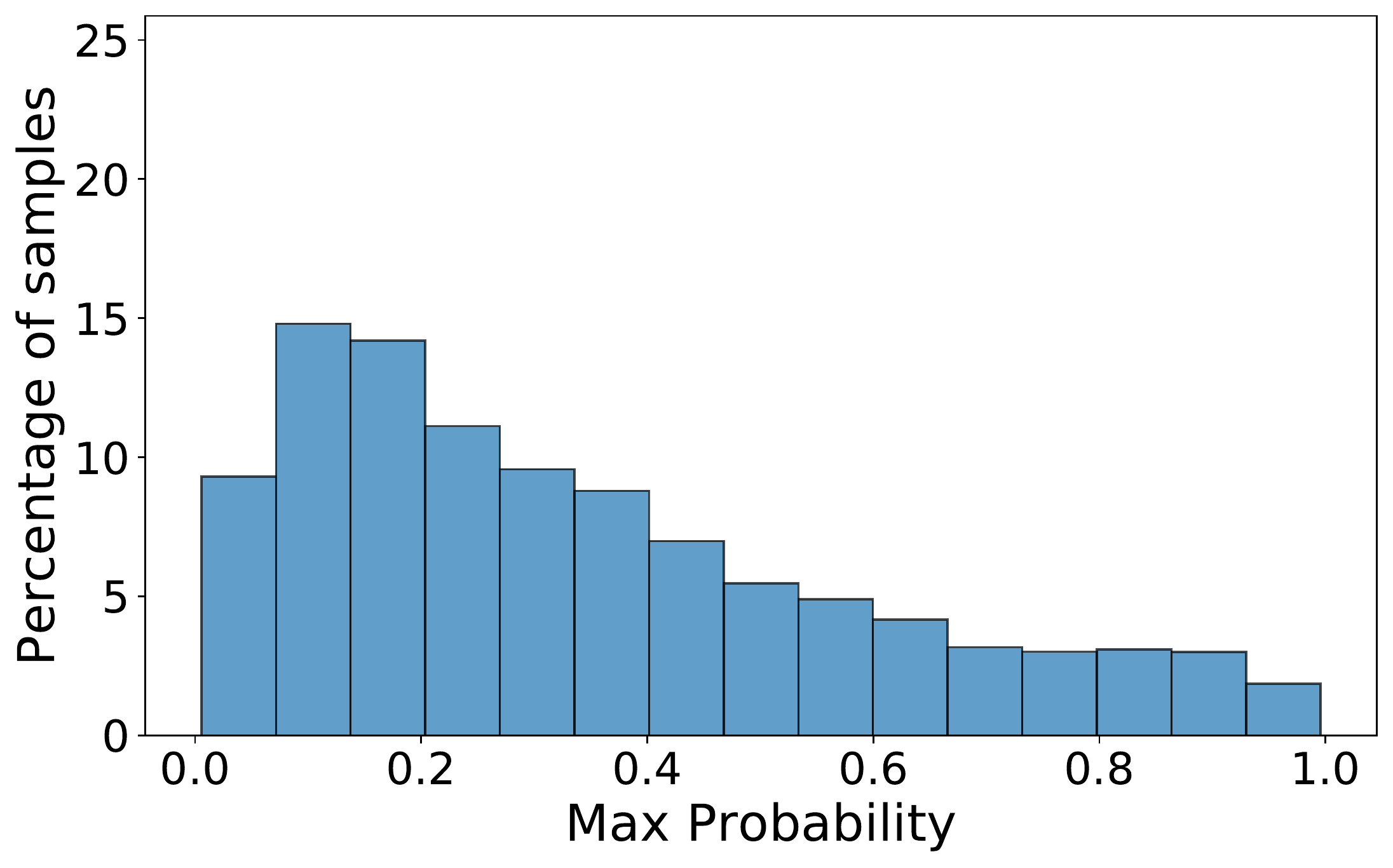}
\caption{Sampling distribution of our models}
\label{fig:conf_ps}
\end{subfigure}
\caption{The distribution of the maximum probability at the masked positions under the sampling distributions of \ele{} and our models.}
\label{fig:conf} 
\end{figure}

We first provide a comparison between the sampling distributions of \ele{} and our models illustrate the effect of our proposed methods.
We conduct evaluations on a subset of the pre-training corpus.
Figure~\ref{fig:conf} demonstrates the distribution of the maximum probability of the two sampling distributions at the masked positions.
We observe that the ratio of the maximum value under \ele{} sampling distribution between $[0.9, 1]$ is much higher than that of our models.
In other words, the original distribution suffers from over-sampling the high-probability tokens and the discriminator is forced to learn from these easy examples repeatedly.
In contrast, the distribution of the maximum value of our models in each interval is relatively more uniform than \ele{}, which indicates that our methods can significantly reduce the probability of sampling the well-classified tokens and smooth the whole sampling distribution.

\subsection{Estimation Quality}

\begin{table}[t]
\centering
\begin{tabular}{l | c c c}
\toprule
\textbf{Token Type} & {Original} & {Replaced} & {All}  \\ 
\midrule
\textbf{Corr. Coeff.} &  0.78 & 0.61 & 0.64 \\
\bottomrule
\end{tabular}
\caption{Correlation coefficient between the actual discriminator loss and the estimated value for \ele{}+HP$_{\rm Loss}$+Focal. ``Original'': sampling correct tokens as replacements. ``Replaced'': the positions that are substituted to incorrect tokens.}
\label{tbl:corr}
\end{table}

In order to measure the estimation quality of discriminator loss, we evaluate our models on a held-out set of pre-training corpus and compute the correlation coefficient between the actual discriminator loss $\mathcal{L}_D(x,c)$ and the estimated value $\hat{\mathcal{L}}_D(x,c)$.
The results of \ele{}+HP$_{\rm Loss}$+Focal are shown in Table~\ref{tbl:corr}.
We report the estimation quality of the original tokens and the replaced tokens separately.
The correlation coefficient value is $0.64$ over two types of tokens, which proves that $\mathcal{L}_D(x,c)$ and $\hat{\mathcal{L}}_D(x,c)$ correlate well.
Furthermore, we observe that the estimation quality over the original tokens is relatively higher than the replacements.
We speculate that the sampling probability of the original tokens is generally higher than the replacements, so the sampling head tends to receive more feedback from these original tokens.

\subsection{Prediction Accuracy of the Discriminator}

\begin{table}[t]
\centering
\begin{tabular}{l c c}
\toprule
\textbf{Model} & \textbf{Masked Positions} & \textbf{All Positions}  \\ 
\midrule
\ele{} &  0.81 & 0.96 \\
Ours &  0.72 & 0.95 \\
\bottomrule
\end{tabular}
\caption{Replacement detection accuracy of \ele{} and \ele{}+HP$_{\rm Loss}$+Focal. 
The models are evaluated on $15$\% masked positions and all input tokens respectively.
Our method samples more hard examples.
}
\label{tbl:pred_acc}
\end{table}

In order to verify the claim that the sampling distribution of our models indeed considers the detection difficulty, we evaluate the prediction accuracy of the discriminator under the two sampling schemes of \ele{} and \ele{}+HP$_{\rm Loss}$+Focal. 
Results are listed in Table~\ref{tbl:pred_acc}.
No matter evaluating at all positions or only at the masked positions, the detection accuracy under our sampling distribution is relatively lower than under masked language modeling in original \ele{}.
Because the unmasked tokens constitute the majority of input examples, the difference of the all-token accuracy between two models is not so distinct compared to the masked tokens.
This phenomenon is consistent with our original intention. It proves that our models can sample more replacements that the discriminator struggles to make correct predictions.
In contrast, the replacements sampled from \ele{} are easier to distinguish.

\section{Conclusion}

We propose to improve the replacement sampling for \ele{} pre-training.
We introduce two methods, namely hardness prediction and sampling smoothing.
Rather than sampling from masked language modeling, we design a new sampling scheme, which considers both the MLM probability and the prediction difficulty of the discriminator.
So the generator can receive feedback from the discriminator.
Moreover, we adopt the focal loss to MLM, which adaptively downweights the well-classified examples and smooth the entire distribution.
The sampling smoothing technique relieves oversampling original tokens as replacements.
Results show that our models outperform \ele{} baseline.
In the future, we would like to apply our strategies to other pre-training frameworks and cross-lingual models.
Moreover, we are exploring how to integrate the findings and insights of the proposed method into the masked language modeling task, which seems also quite promising.

\bibliographystyle{acl_natbib}
\bibliography{anthology,acl2021,ele}

\appendix

\section{GLUE Details}
GLUE~\cite{glue} is a collection of various natural language understanding (NLU) tasks, including inference tasks (MNLI, QNLI and RTE), similarity and paraphrase tasks (MRPC, QQP and STS-B), and single-sentence tasks (CoLA and SST-2).

\paragraph{MNLI}
The Multi-Genre Natural Language Inference Corpus~\cite{mnli2017} is a crowdsourced collection of sentence pairs with textual entailment annotations. Given a premise sentence and a hypothesis sentence, the task is to predict whether the premise entails the hypothesis (entailment), contradicts the hypothesis (contradiction), or neither (neutral). 
The dataset contains 393k train examples drawn from ten different sources.

\paragraph{QNLI}
The Stanford Question Answering Dataset~\cite{squad1} is a question-answering
dataset consisting of question-paragraph pairs.
The task is to predict whether a context sentence contains the answer to a question sentence.
The dataset contains 108k train examples from Wikipedia.

\begin{table*}[t]
\centering
% \small
\begin{tabular}{l c c}
\toprule
  \textbf{Hyperparameter}   & \textbf{Value} \\ \midrule
  Learning Rate & 3e-4 for Small, \{5e-5, 1e-4, 1.5e-4\} for Base \\
  Adam $\epsilon$ & 1e-6 \\
  Adam $\beta_1$ & 0.9  \\
  Adam $\beta_2$ & 0.999  \\
  Layerwise LR decay & 0.8  \\
  Learning rate decay & Linear \\
  Warmup fraction & 0.1 \\
  Attention Dropout & 0.1 \\
  Dropout & 0.1 \\
  Weight Decay & 0 for Small, \{0, 0.01\} for Base \\
  Batch Size & 32 \\
  Train Epochs & \{10, 15, 20\} for RTE, \{3, 4, 5\} for other tasks \\
\bottomrule
\end{tabular}
\caption{Fine-tuning details about ELECTRA baseline and our models.}
\label{tbl:finetuning}
\end{table*}

\paragraph{QQP}
The Quora Question Pairs dataset is a collection of question pairs from the community question-answering website Quora.
The task is to determine whether a pair of questions are semantically equivalent. 
The dataset contains 364k train examples.

\paragraph{RTE}
The Recognizing Textual Entailment datasets~\cite{rte1,rte2,rte3,rte5} come from a series of annual textual entailment challenges.
Given a premise sentence and a hypothesis sentence, the task is to predict whether the premise entails the hypothesis or not.
The dataset contains 2.5k train examples from a series of annual textual entailment challenges.

\paragraph{SST}
The Stanford Sentiment Treebank~\cite{sst2013} consists of sentences from movie reviews and human annotations of their sentiment. The task is to predict the sentiment of a given sentence.
The dataset contains 67k train examples.

\begin{table}[t]
\centering
% \small
\begin{tabular}{l c c}
\toprule
  \textbf{Hyperparameter}   & \textbf{Small} & \textbf{Base}   \\ \midrule
  Number of layers & 12 & 12 \\
  Hidden Size & 256 & 768 \\
  FFN inner hidden size &  1024 & 3072 \\
  Attention heads & 4 & 12 \\
  Attention head size & 64 & 64 \\
  Embedding Size & 128 & 768 \\
  Generator Size & 1/4 & 1/3 \\
  Mask percent & 15 & 15 \\
  Learning Rate Decay &  Linear &  Linear \\
  Warmup steps &  10000 &  10000 \\
  Learning Rate &  5e-4 & 1e-4 \\
  Adam $\epsilon$ & 1e-6 & 1e-6 \\
  Adam $\beta_1$ & 0.9 & 0.9 \\
  Adam $\beta_2$ & 0.999 & 0.999 \\
\bottomrule
\end{tabular}
\caption{Pre-training details about ELECTRA baseline and our models.}
\label{tbl:pretraining}
\end{table}

\paragraph{MRPC}
The Microsoft Research Paraphrase Corpus~\cite{mrpc2005} is a corpus of sentence pairs automatically extracted from online news sources.
The task is to predict whether two sentences are semantically equivalent or not.
The dataset contains 3.7k train examples.

\paragraph{CoLA}
The Corpus of Linguistic Acceptability~\cite{cola2018} consists of English acceptability judgments drawn from books and journal articles on linguistic theory.
The task is to determine whether a given sentence is grammatical or not. 
The dataset contains 8.5k train examples.

\paragraph{STS-B}
The Semantic Textual Similarity Benchmark~\cite{sts-b2017} is a collection of sentence pairs drawn from news headlines, video and image captions, and natural language inference data.
The tasks is to predict how semantically similar two sentences are on a 1-5 scale.
The dataset contains 5.8k train examples.

\section{Pre-training Details}

We did not search any hyperparameters during pre-training.
Most of our pre-training configurations are same as the original ELECTRA~\cite{electra}.
The learning rate for base-sized model is changed from 2e-5 to 1e-5 on both ELECTRA baseline and our models, because we expect the fair comparison with BERT and RoBERTa.
We keep $\lambda_2=50$ for both ELECTRA-EL and ELECTRA-AD.
The full set of pre-training hyperparameters is provided in Table~\ref{tbl:pretraining}.

\section{Fine-tuning Details}

For base-sized models, we searched the learning rate and pre-training epochs on both ELECTRA baseline and our models.
For small-sized models, we use the same hyperparameters as suggested in ELECTRA.
All the results come from the single-task fine-tuning.
The full set of fine-tuning hyperparameters is provided in Table~\ref{tbl:finetuning}.

\section{Prediction Accuracy of the Discriminator}
\begin{figure}[h]
    \centering
    \includegraphics[width=0.96\linewidth]{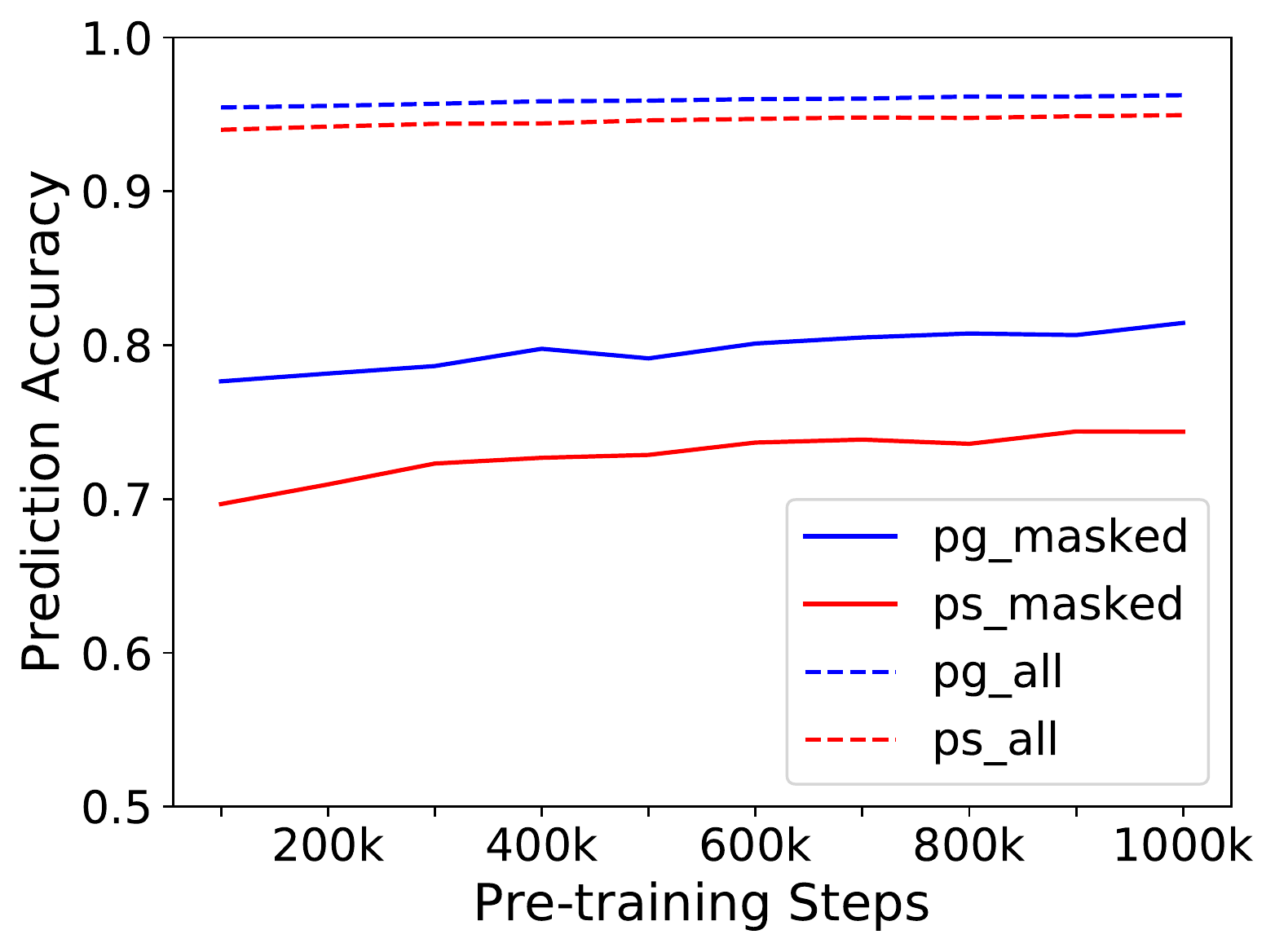}
    \caption{The prediction accuracy of the discriminator. Blue lines indicate sampling replacements according to $p_{\subG}$, red lines are according to $p_{\subS}$. The solid line represents the prediction accuracy evaluated on the 15\% masked tokens and the dashed line represents the prediction accuracy of all input tokens.}
    \label{fig:pred_acc}
\end{figure}

\end{document}